\providecommand{\hypertarget}[2]{#2}
\def\tok@scan#1{%
  \ifx#1\relax
    \let\tok@next\relax
  \else
    \edef\my@list{\my@list#1}%
    \let\tok@next\tok@scan
  \fi
  \tok@next
}
\newcommand{\@strip}[2]{%
  \def\my@list{}\tok@scan#2\relax\let#1\my@list}
\newcommand{\Cite}[1]{\@strip\@args{#1}\cite\@args}
\newcolumntype{Y}{>{\centering\arraybackslash}X}
\newtheorem{proposition}{Proposition}
\newtheorem{definition}{Definition}
\newtheorem{example}{Example}
\providecommand{\texorpdfstring}[2]{#1}
\title{On Irrelevant Literals in Pseudo-Boolean Constraint Learning}
\author{
Daniel Le Berre$^{1,2}$\and
Pierre Marquis$^{1,2,3}$\and
Stefan Mengel$^{2,1}$\And
Romain Wallon$^{1,2}$\\
\affiliations
$^1$Université d'Artois, Lens, France\\
$^2$CRIL, CNRS UMR 8188, Lens, France\\
$^3$Institut Universitaire de France\\
\emails
\{leberre, marquis, mengel, wallon\}@cril.fr
}
\begin{document}

\maketitle

\begin{abstract}
Learning pseudo-Boolean (PB) constraints in PB solvers exploiting
cutting planes based inference is not as well understood as clause
learning in conflict-driven clause learning solvers. In this paper, we
show that PB constraints derived using cutting planes may contain
\emph{irrelevant literals}, i.e., literals whose assigned values
(whatever they are) never change the truth value of the constraint. Such
literals may lead to infer constraints that are weaker than they should
be, impacting the size of the proof built by the solver, and thus also
affecting its performance. This suggests that current implementations of
PB solvers based on cutting planes should be reconsidered to prevent the
generation of irrelevant literals. Indeed, detecting and removing
irrelevant literals is too expensive in practice to be considered as an
option (the associated problem is \textsf{NP}-hard).
\end{abstract}

\hypertarget{introduction}{%
\section{Introduction}\label{introduction}}

Even though modern SAT solvers are known to perform well in practice on
many industrial benchmarks, there exist instances which remain hard to
solve, even for state-of-the-art solvers \Cite{sat12}. In particular,
this is true for unsatisfiable formulae for which inconsistency can only
be derived with an exponential number of resolution steps, e.g., for
pigeonhole-principle formulae \Cite{haken85}. Many of those hard
formulae require the solvers to be able to either detect and break
symmetries or to ``count'' so as to generate short proofs
\Cite{lakhdar2, symmetryjo, metin}. This was an important motivation
for the development of pseudo-Boolean (PB) reasoning \Cite{handbook},
which benefits from the expressiveness of \emph{PB constraints} (linear
equations or inequations over Boolean variables) and from the strength
of the \emph{cutting planes} proof system
\Cite{gomory58, hooker88, nordstrom15}. This proof system is in theory
strictly stronger than the resolution proof system used in SAT solvers,
as the former \emph{p-simulates} the latter \Cite{cook87}: any
resolution proof can be simulated by a cutting planes proof of
polynomial size w.r.t.~the size of the original proof. Yet, in practice,
none of the current PB solvers uses the full power of the cutting planes
proof system \Cite{jakobproofsolvers}. Indeed, most of them are built
on a specific form of this proof system, which can be viewed as a
generalization of resolution \Cite{hooker88}. This allows to extend
clausal inference to pseudo-Boolean inference, inheriting many of the
techniques used in SAT solving \Cite{pbchaff, galena}. In particular,
the success of the conflict-driven clause learning architecture of
modern SAT solvers \Cite{grasp, chaff, minisat} motivated its
generalization to PB problems: when a conflict is encountered (i.e.,
when a PB constraint becomes falsified), the \emph{cancellation} rule is
applied between the conflicting constraint and the reason for the
propagation of some of its literals to infer a new conflicting
constraint. Note that,~contrary to what happens for resolution based
solvers, the reason may need to be \emph{weakened} before resolving to
preserve the conflict. This operation is repeated until the inferred
constraint propagates some of its literals. The constraint is then
learned and a backjump is performed.

Over the years, many PB solvers implementing variants of the cutting
planes proof system have been developed
\Cite{pbchaff, galena, pueblo, sat4j}. Recently, \emph{RoundingSat}
\Cite{rs} introduced an aggressive use of the division and weakening
rules (see Section \ref{preliminaries} for details about cutting planes
rules). However, since the first PB evaluation \Cite{pb05}, it has been
observed that PB solvers are not as efficient as resolution based
solvers, which can solve PB problems by encoding PB constraints into
clauses \Cite{minisatp, openwbo, naps}. While PB solvers based on
cutting planes perform generally well on specific classes of benchmarks,
they fail to run uniformly well on all benchmarks
\Cite{jakobproofsolverspractical}. This is partly due to the complexity
of deciding when to use the rules of the cutting planes proof system,
and of implementing their application efficiently. The initial trend has
been to replace the application of the resolution rules by the
\emph{generalized resolution} rules \Cite{hooker88} during conflict
analysis. However, this approach is not satisfactory because it is
equivalent to resolution when applied to clauses, and requires a
specific preprocessing to derive cardinality constraints
\Cite{cardinalitydetection, jakobcard}. This is why recent years have
seen a renewed interest in PB solving and in the theory of cutting
planes based inference
\Cite{rs, jakobproofsolvers, jakobproofsolverspractical, divvssat}.

In this paper, we provide a new perspective regarding the desirable
properties of the above proof systems. In particular, we show in Section
\ref{irrelevant-literals-in-pb-constraints} that cutting planes based
inference may introduce \emph{irrelevant literals} in the derived
constraints, i.e., literals that have no effect on the truth value of
the constraint in which they appear. Such literals seem inherent to the
use of this proof system, as most of its rules may produce them, meaning
that PB solvers have either to switch back to a proof system equivalent
to the weaker resolution -- which ensures to produce no irrelevant
literals, but can only infer clauses -- or to deal with these literals.
Our main contribution is to show that, not only irrelevant literals are
produced by the derivation rules used in current PB solvers, but these
literals also contribute to the inference of constraints that are not as
strong as they should be. Unfortunately, checking whether a literal is
relevant is \textsf{NP}-complete \cite[Section~9.6]{booleanFunctions},
so in practice it would be unrealistic to remove irrelevant literals
from all PB constraints derived by a PB solver so as to infer stronger
constraints. As a workaround, we introduce in Section
\ref{eliminating-irrelevant-literals} an incomplete algorithm for
detecting and removing these literals. We use it to show in Section
\ref{experimental-results} that irrelevant literals are produced in
practice by two PB solvers, namely \emph{Sat4j} \Cite{sat4j} and
\emph{RoundingSat} \Cite{rs}, which competed in the PB evaluation 2016.
Our experiments also show that, for some instances, irrelevant literals
may have an impact on the size of the proofs built by the solver,
threatening its performance.

\hypertarget{preliminaries}{%
\section{Preliminaries}\label{preliminaries}}

We consider a propositional setting defined on a finite set of
propositional variables \(V\) which are classically interpreted. A
literal \(l\) is a propositional variable \(v \in V\) or its negation
\(\bar{v}\). In this context, Boolean values are represented by the
integers \(1\) (true) and \(0\) (false) and thus \(\bar{v} = 1 - v\).
\(\models\) denotes logical entailment and \(\equiv\) denotes logical
equivalence.

\hypertarget{pseudo-boolean-constraints}{%
\subsubsection{Pseudo-Boolean
Constraints}\label{pseudo-boolean-constraints}}

A \emph{pseudo-Boolean (PB) constraint} is an inequation of the form
\(\sum_{i = 1}^{n} \alpha_i l_i \vartriangle \delta\), where
\(\alpha_i\) and \(\delta\) are integers, \(l_i\) are literals and
\(\vartriangle \in \{\leq, <, =, >, \geq\}\). Each \(\alpha_i\) is
called a \emph{weight} or \emph{coefficient} and \(\delta\) is called
the \emph{degree} of the constraint. Any PB constraint can be
\emph{normalized} in linear time into (a conjunction of) constraints of
the form \(\sum_{i = 1}^{n} \alpha_i l_i \geq \delta\) in which the
coefficients and the degree are all positive integers. Thus, in the
following, we assume that all PB constraints are normalized. A
\emph{cardinality constraint} is a PB constraint with all its
coefficients equal to \(1\), and a \emph{clause} is a cardinality
constraint with its degree equal to \(1\).

\hypertarget{inference-rules-and-proof-systems}{%
\subsubsection{Inference Rules and Proof
Systems}\label{inference-rules-and-proof-systems}}

The PB counterpart of the well-known resolution proof system is the
\emph{cutting planes} proof system \Cite{gomory58}, which defines the
following rules \Cite{handbook}.

\hypertarget{saturation.}{%
\paragraph{Saturation.}\label{saturation.}}

The PB constraint
\(\alpha l + \sum_{i = 1}^{n} \alpha_i l_i \geq \delta\) where
\(\alpha > \delta\) is \emph{equivalent} to
\(\delta l + \sum_{i = 1}^{n} \alpha_i l_i \geq \delta\).

\hypertarget{weakening.}{%
\paragraph{Weakening.}\label{weakening.}}

The PB constraint
\(\alpha l + \sum_{i = 1}^{n} \alpha_i l_i \geq \delta\) \emph{entails}
the constraint \(\sum_{i = 1}^{n} \alpha_i l_i \geq (\delta - \alpha)\).

\hypertarget{division.}{%
\paragraph{Division.}\label{division.}}

For any integer \(\rho\), the constraint
\(\sum_{i = 1}^{n} \alpha_i l_i \geq \delta\) \emph{entails} the
constraint
\(\sum_{i = 1}^{n} \lceil{\frac{\alpha_i}{\rho}}\rceil l_i \geq \lceil{\frac{\delta}{\rho}}\rceil\).
When each \(\alpha_i\) is divisible by \(\rho\), both constraints are
\emph{equivalent}.

\hypertarget{multiplication.}{%
\paragraph{Multiplication.}\label{multiplication.}}

The PB constraint \(\sum_{i = 1}^{n} \alpha_i l_i \geq \delta\) is
\emph{equivalent} to \(\sum_{i = 1}^{n} \mu\alpha_i l_i \geq \mu\delta\)
for any integer \(\mu > 0\).

\hypertarget{addition.}{%
\paragraph{Addition.}\label{addition.}}

The conjunction of the two PB constraints
\(\sum_{i = 1}^{n} \alpha_i l_i \geq \delta\) and
\(\sum_{i = 1}^{n} \alpha'_i l'_i \geq \delta'\) \emph{entails} the sum
of both constraints, i.e.,
\(\sum_{i = 1}^{n} \alpha_i l_i + \alpha'_i l'_i \geq (\delta + \delta')\).

\hypertarget{cancellation.}{%
\paragraph{Cancellation.}\label{cancellation.}}

The conjunction of the two PB constraints
\(\alpha l + \sum_{i = 1}^{n} \alpha_i l_i \geq \delta\) and
\(\alpha \bar l + \sum_{i = 1}^{n} \alpha'_i l'_i \geq \delta'\)
\emph{entails}
\(\sum_{i = 1}^{n} \alpha_i l_i + \alpha'_i l'_i \geq (\delta + \delta' - \alpha)\).
Note that when both constraints are clauses, \emph{cancellation} is
equivalent to classical resolution \Cite{hooker88}.

\hypertarget{section}{%
\paragraph{}\label{section}}

We denote by \emph{generalized resolution} the proof system based on the
cancellation and saturation rules and by \emph{cutting planes} the proof
system allowing unrestricted linear combinations of PB constraints and
divisions. Each of these proof systems is refutationally complete
\Cite{hooker88}.

\hypertarget{irrelevant-literals-in-pb-constraints}{%
\section{Irrelevant Literals in PB
Constraints}\label{irrelevant-literals-in-pb-constraints}}

A specific problem arising with general PB constraints but not with
clauses or cardinality constraints that are neither tautological nor
contradictory is the presence of \emph{irrelevant literals}, which can
be characterized using conditioning.

\begin{definition}[Conditioning]
Given a PB constraint $\chi$ and a consistent term
$\tau$, $\chi | \tau$ is the \emph{conditioning} of $\chi$ by $\tau$,
obtained by replacing each literal in $\chi$ by $1$ if it appears in $\tau$, or
by $0$ if its opposite appears in $\tau$.
The constraint is normalized by moving constants in the left hand side
to the right hand side.
\end{definition}

\begin{definition}[Irrelevant literal]
A literal $l$ is said to be \emph{irrelevant} w.r.t.~a constraint $\chi$ when
$\chi | l \equiv \chi | \bar{l}$.
Otherwise, $l$ is said to be \emph{relevant} w.r.t.~$\chi$ (we also say that
$\chi$ \emph{depends} on $l$).
Equivalently, $l$ is \emph{irrelevant} w.r.t.~$\chi$ when flipping the value
of $l$ in any model $M$ of $\chi$ cannot make it a counter-model of~$\chi$.
\end{definition}

In the following, when there is no ambiguity about which constraint is
considered, we omit the constraint and simply say that \(l\) is relevant
or irrelevant.

The following proposition is an easy consequence of the definition of
literal relevance.

\begin{proposition}
\label{prop:min}
If there exists an irrelevant literal $l$ with coefficient $\alpha$ in a
constraint $\chi$, then all literals $l'$ having a coefficient
$\alpha' \leq \alpha$ in $\chi$ are also irrelevant.
\end{proposition}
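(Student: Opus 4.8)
The plan is to reduce the definition of irrelevance to a simple numerical condition on the left-hand side of the (normalized) constraint, and then to argue by model surgery. Write \(\chi = \sum_{i=1}^n \alpha_i l_i \geq \delta\), and for an assignment \(M\) let \(\mathrm{lhs}(M) = \sum_{i=1}^n \alpha_i M(l_i)\) denote the value of the left-hand side. First I would reformulate the flipping characterization of irrelevance. Since flipping a literal \(l\) with coefficient \(\alpha\) changes \(\mathrm{lhs}\) by exactly \(\pm\alpha\), flipping \(l\) from false to true can never turn a model into a counter-model (it only increases \(\mathrm{lhs}\)). Hence \(l\) is irrelevant if and only if flipping \(l\) from true to false preserves models, i.e.\ if and only if every model \(M\) of \(\chi\) in which \(l\) is true satisfies \(\mathrm{lhs}(M) \geq \delta + \alpha\). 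Establishing this equivalence cleanly is the one step that really uses the definitions, and I expect it to be the main (though modest) obstacle: one must keep track of the fact that \(\mathrm{lhs}(M) \geq \delta\) already holds for every model, so that the genuine content is the extra slack \(+\alpha\) for the models satisfying \(l\).

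Granting this reformulation, the proof of the proposition becomes a short direct argument. Let \(l\) be irrelevant with coefficient \(\alpha\) and let \(l'\) be a literal with coefficient \(\alpha' \leq \alpha\); I want to show that every model \(M\) with \(l'\) true satisfies \(\mathrm{lhs}(M) \geq \delta + \alpha'\), which by the reformulation is exactly the irrelevance of \(l'\). If \(l' = l\) there is nothing to prove, so I assume \(l \neq l'\); since \(\chi\) is normalized, \(l\) and \(l'\) range over distinct variables and can be reassigned independently. I then fix a model \(M\) with \(l'\) true and split on the value of \(l\) in \(M\).

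If \(l\) is true in \(M\), then irrelevance of \(l\) gives \(\mathrm{lhs}(M) \geq \delta + \alpha \geq \delta + \alpha'\), as desired. If \(l\) is false in \(M\), I would build the assignment \(M''\) obtained from \(M\) by setting \(l'\) to false and \(l\) to true. This swap changes the left-hand side by \(\alpha - \alpha' \geq 0\), so \(\mathrm{lhs}(M'') = \mathrm{lhs}(M) + (\alpha - \alpha') \geq \mathrm{lhs}(M) \geq \delta\); thus \(M''\) is again a model, and in it \(l\) is true. Applying irrelevance of \(l\) to \(M''\) yields \(\mathrm{lhs}(M'') \geq \delta + \alpha\), and substituting back gives \(\mathrm{lhs}(M) = \mathrm{lhs}(M'') - (\alpha - \alpha') \geq (\delta + \alpha) - (\alpha - \alpha') = \delta + \alpha'\). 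In both cases \(\mathrm{lhs}(M) \geq \delta + \alpha'\), so flipping \(l'\) from true to false keeps \(M\) a model; hence \(l'\) is irrelevant.

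As an alternative I would keep in mind a purely arithmetic version: via conditioning, \(l\) with coefficient \(\alpha\) is irrelevant exactly when no assignment to the remaining literals yields a partial sum in the window \([\delta-\alpha,\delta-1]\), and the inclusion \([\delta-\alpha',\delta-1] \subseteq [\delta-\alpha,\delta-1]\) for \(\alpha' \leq \alpha\) then drives the same conclusion. This route trades model surgery for interval bookkeeping, but it has the same single delicate point, namely getting the direction of the conditioning equivalence and the exact endpoints of the window right.
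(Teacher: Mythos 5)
Your proof is correct, and at its core it performs the same exchange as the paper's: comparing the assignment with $l$ false, $l'$ true against the one with $l$ true, $l'$ false, and using $\alpha' \leq \alpha$ to control the difference. The packaging, however, differs in two ways worth noting. The paper argues by contradiction: it assumes $l'$ relevant, takes a witnessing model $M$ and counter-model $M'$, and derives a contradiction from the entailment $\chi | (\bar{l} \wedge l') \models \chi | (l \wedge \bar{l'})$, i.e., $\sum_i \alpha_i l_i \geq \delta - \alpha' \models \sum_i \alpha_i l_i \geq \delta - \alpha$. You instead prove irrelevance of $l'$ directly, via the quantitative lemma that $l$ is irrelevant iff every model $M$ satisfying $l$ has $\mathrm{lhs}(M) \geq \delta + \alpha$, and then do explicit model surgery (your $M''$). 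Your direct route buys a small but real simplification: the paper's proof needs the step ``as $l$ is irrelevant, we can suppose w.l.o.g.\ that it is falsified by $M$,'' whose justification (flipping an irrelevant literal in a counter-model cannot yield a model, by involutivity of flipping) is left implicit; your case split on the value of $l$ in $M$ replaces that w.l.o.g.\ with two transparent computations. Conversely, the paper's conditioning formulation makes the entailment between the two conditioned constraints immediate, and your ``alternative arithmetic version'' via the window $[\delta - \alpha, \delta - 1]$ is in fact exactly the characterization the paper itself uses later, in Section \ref{eliminating-irrelevant-literals}, to reduce relevance testing to subset-sum, so the two viewpoints are reconciled there. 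One hypothesis you rely on --- that $l$ and $l'$ sit on distinct variables and can be reassigned independently --- is indeed licensed by normalization and is assumed just as silently in the paper's own proof, which writes $\chi$ as $\alpha l + \alpha' l' + \sum_{i=1}^n \alpha_i l_i \geq \delta$.
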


\begin{proof}
Towards a contradiction, let $\chi$ be the constraint
$\alpha l + \alpha' l' + \sum_{i = 1}^n \alpha_i l_i \geq \delta$ with
$\alpha' \leq \alpha$ and $l$ irrelevant.
Suppose that $l'$ is relevant.
There exists a model $M$ of $\chi$ such that $M$ satisfies $l'$
and flipping its value makes $M$ a counter-model of~$\chi$.
Let us note $M'$ this counter-model.
As $l$ is irrelevant, we can suppose w.l.o.g.~that it is falsified by $M$, and
thus by $M'$.
$M$ satisfies the constraint $\chi | (\bar{l} \wedge l') \equiv
\sum_{i = 1}^{n} \alpha_i l_i \geq \delta - \alpha' \, (1)$, and so it is for
$M'$, because $M$ and $M'$ coincide on $l_i$s.
As $l$ is irrelevant, flipping its value cannot make $M'$ a model of $\chi$.
Thus, $M'$ does not satisfy $\chi | (l \wedge \bar{l'})
\equiv \sum_{i = 1}^{n} \alpha_i l_i \geq \delta - \alpha \, (2)$.
However, because $\alpha' \leq \alpha$, we have
$(1) \models (2)$, which is incompatible with the fact that $M' \models (1)$.
\end{proof}

\begin{example}
\label{ex:irrelevant}
In the constraint $10a + 5b + 5c + 2d + e + f \geq 15$, the literal $d$ is
irrelevant, and so it is for the literals $e$ and $f$.
In particular, this means that assigning these literals to any truth value
preserves the semantics of the constraint.
Thus, the three constraints $10a + 5b + 5c + 2d + e + f \geq 15$,
$10a + 5b + 5c \geq 15$ and $10a + 5b + 5c \geq 11$ are logically equivalent.
\end{example}

Observe that, even though the constraints are logically equivalent in
the previous example, they are not equivalent over the reals (in this
case, the second one is the stronger). Over the Booleans, the
\emph{slack} of a constraint \Cite{galena} is a good heuristic
indicator of its strength.

\begin{definition}[Slack]
The \emph{slack} of a PB constraint $\sum_{i = 1}^{n} \alpha_i l_i \geq \delta$
is the value $(\sum_{i = 1}^{n} \alpha_i) - \delta$.
\end{definition}

\begin{example}
\label{ex:slack}
Let us consider the constraints in Example \ref{ex:irrelevant}.
\begin{itemize}
\item $10a + 5b + 5c + 2d + e + f \geq 15$ has slack $9$.
\item $10a + 5b + 5c \geq 15$ has slack $5$.
\item $10a + 5b + 5c \geq 11$ has slack $9$.
\end{itemize}
\end{example}

This value has long been used by PB solvers, as it allows to efficiently
detect propagations and conflicts. In particular, all coefficients
having a weight greater than the slack have to be satisfied. As such,
the smaller the slack, the better the constraint from the solver
viewpoint.

\hypertarget{inference-rules-producing-irrelevant-literals}{%
\subsection{Inference Rules Producing Irrelevant
Literals}\label{inference-rules-producing-irrelevant-literals}}

Irrelevant literals seems inherent to PB reasoning. Indeed, all cutting
planes rules may infer constraints containing irrelevant literals as
long as they do not preserve equivalence, even if the constraints used
to produce them do not contain any such literals, as shown by the
following examples.

\hypertarget{weakening.-1}{%
\paragraph{Weakening.}\label{weakening.-1}}

Take the constraint \(3a + 3b + c + d \geq 4\), in which all literals
are relevant. If this contraint is weakened on~\(d\), the resulting
constraint \(3a + 3b + c \geq 3\) does not depend on \(c\) anymore.

\hypertarget{division.-1}{%
\paragraph{Division.}\label{division.-1}}

Take the constraint \(6a + 5b + c \geq 6\). All its literals are
relevant, but dividing it by \(2\) leads to the inference of the same
constraint as above, i.e., \(3a + 3b + c \geq 3\).

\hypertarget{addition.-1}{%
\paragraph{Addition.}\label{addition.-1}}

Take the two constraints \(4a + 3b + 3c \geq 6\) and
\(3b + 2a + 2d \geq 3\), which both depend on all their literals. Adding
them produces the constraint \(6a + 6b + 3c + 2d \geq 9\), in which
\(d\) is irrelevant.

\hypertarget{cancellation.-1}{%
\paragraph{Cancellation.}\label{cancellation.-1}}

If we cancel out literal \(e\) with the two constraints
\(4b + 3\bar{e} + 3c + 2a \geq 6\) and \(4a + 3e + 2b + 2d \geq 6\), in
which all literals are relevant, we again get the constraint
\(6a + 6b + 3c + 2d \geq 9\).

\hypertarget{artificially-relevant-literals-in-pb-solvers}{%
\subsection{\texorpdfstring{\emph{Artificially} Relevant Literals in PB
Solvers}{Artificially Relevant Literals in PB Solvers}}\label{artificially-relevant-literals-in-pb-solvers}}

Because the rules presented in the previous section are widely used by
PB solvers during their conflict analysis, these solvers have to deal
with constraints containing irrelevant literals. In particular, the main
issue arises when cutting planes rules are applied to these constraints:
these rules may cause irrelevant literals to become relevant in the
newly inferred constraint. When this occurs, we say that the literal has
become \emph{artificially} relevant. As we show below, this may happen
in different circumstances.

\hypertarget{generalized-resolution-based-solvers}{%
\subsubsection{Generalized Resolution Based
Solvers}\label{generalized-resolution-based-solvers}}

Let us consider a generalized resolution based PB solver, such as
\emph{Sat4j} \Cite{sat4j}. Suppose that a conflict occurs on
\(4a + 4b + 3\bar{e} + 3g + 3h + 2i + 2j \geq 16 \, (\circ)\). If the
reason for propagating \(e\) is \(6a + 6b + 4c + 3d + 3e + 2f~\geq~10\),
the conflict analysis is performed by applying the cancellation rule on
\(e\) between these two constraints. However, in some cases, the
resulting constraint may not be conflicting anymore. To preserve the
CDCL algorithm invariant, the reason of \(e\) may need to be weakened,
e.g., on \(c\). This produces the constraint
\(6a + 6b + 3d + 3e + 2f \geq 6 \, (\diamond)\), in which \(f\) is
irrelevant. Note that \((\diamond)\) has slack \(14\). Applying the
cancellation rule between the conflicting constraint \((\circ)\) and
\((\diamond)\) produces the constraint
\(10a + 10b + 3d + 3g + 3h + 2f + 2i + 2j~\geq~19 \, (\star)\), in which
\(f\) has become artificially relevant.

\hypertarget{division-based-solvers}{%
\subsubsection{Division Based Solvers}\label{division-based-solvers}}

A recent improvement in PB solving is \emph{RoundingSat} \Cite{rs},
which implements an aggressive use of the division and weakening rules
during conflict analysis.

Let us consider the constraint \(17a + 17b + 8c + 4d + 2e + 2f \geq 23\)
in which all literals are relevant. Suppose that, during the search
performed by \emph{RoundingSat}, \(c\) and~\(f\) are satisfied and all
other literals are falsified by some propagations. This constraint is
now conflictual: to analyze the conflict, \emph{RoundingSat} resolved it
against the reason for one of its falsified literals, e.g., the reason
for \(\bar{d}\). \emph{RoundingSat} weakens the constraint on \(f\), as
it is not falsified and its coefficient (\(2\)) is not divisible by the
coefficient of \(d\) (\(4\)), giving the constraint
\(17a + 17b + 8c + 4d + 2e \geq 21 \, (\nabla)\). Observe that \(e\) is
now irrelevant and that \((\nabla)\) has slack \(27\). When
\emph{RoundingSat} applies the division by \(4\), the constraint becomes
\(5a + 5b + 2c + d + e \geq 6 \, (\Delta)\), in which all literals are
relevant.

As pointed out by a reviewer, in \emph{RoundingSat}, irrelevant literals
produced after weakening a reason are always falsified by the current
assignment. Indeed, suppose that the literal \(l\) it propagates has
coefficient \(\alpha\). By construction, all remaining satisfied and
unassigned literals have a coefficient that is divisible by \(\alpha\),
and thus that is greater than \(\alpha\). As \(l\) is propagated, it is
necessarily relevant, and Proposition~\ref{prop:min} tells us that this
is also the case for these literals.

\hypertarget{impact-of-artificially-relevant-literals}{%
\subsection{Impact of Artificially Relevant
Literals}\label{impact-of-artificially-relevant-literals}}

Artificially relevant literals in the constraints inferred by the solver
may lead to infer constraints that are weaker than they could be if
irrelevant literals were not there in the first place. Indeed, remember
that, by definition, an irrelevant literal may be removed from the
constraint. This can be achieved by locally assigning it to a truth
value, as shown below.

\hypertarget{removal-by-weakening}{%
\subsubsection{Removal by Weakening}\label{removal-by-weakening}}

A first approach is assigning irrelevant literals to~\(1\), i.e.,
applying the weakening rule to these literals. This approach may
sometimes trigger the saturation rule, so that coefficients are kept
small enough. This may have an impact on the solver efficiency,
especially when arbitrary precision is required.

Let us consider the case of generalized resolution based solvers above.
If the irrelevant literal \(f\) is weakened away from \((\diamond)\),
this constraint becomes \(6a + 6b + 3d + 3e \geq 4\), which is saturated
into \(4a + 4b + 3d + 3e \geq 4 \, (\diamond_w)\), which has slack
\(10\). If this constraint is used in place of \((\diamond)\) when
applying the cancellation with \((\circ)\), one gets
\(8a + 8b + 3d + 3g + 3h + 2i + 2j \geq 17 \, (\star_w)\), which is
strictly stronger than the constraint \((\star)\) (\(a \land b\) is not
an implicant of the constraint any longer).

However, this approach does not always allow to infer stronger
constraints. Indeed, if we apply it to the \emph{RoundingSat} example,
\(e\) is weakened away from \((\nabla)\), giving the constraint
\(17a + 17b + 8c + 4d \geq 19 \, (\nabla_w)\), which has slack \(27\).
When the division by \(4\) is applied, the constraint becomes
\(5a + 5b + 2c + d \geq 5 \, (\Delta_w)\). Observe that \(c\) and \(d\)
are now irrelevant: the constraint is equivalent to \(a + b \geq 1\),
which is strictly weaker than the constraint we obtained previously.

\hypertarget{simple-removal}{%
\subsubsection{Simple Removal}\label{simple-removal}}

The second approach is assigning irrelevant literals to \(0\), i.e.,
removing them without modifying anything else on the constraint. This
approach allows to strengthen the constraint over the reals, although it
remains equivalent over the Booleans.

If we apply it to the \emph{RoundingSat} example, it indeed allows to
derive a stronger constraint: from \((\nabla)\), we now get
\(17a + 17b + 8c + 4d \geq 21 \, (\nabla_r)\), which has slack \(25\).
This constraint produces, after applying the division by \(4\), the
stronger constraint \(5a + 5b + 2c + d \geq 6 \, (\Delta_r)\).

However, considering the example of generalized resolution based
solvers, this approach removes \(f\) from \((\diamond)\), giving
\(6a + 6b + 3d + 3e \geq 6 \, (\diamond_r)\), which has slack \(12\).
When applying the cancellation rule between this constraint and
\((\circ)\), one gets
\(10a + 10b + 3d + 3g + 3h + 2i + 2j \geq 19 \, (\star_r)\), which is
stronger than \((\star)\), but weaker than \((\star_w)\). In this case,
the weakening based approach is better.

\hypertarget{slack-based-approach}{%
\subsubsection{Slack Based Approach}\label{slack-based-approach}}

The previous examples show that neither of the two removal approaches is
stronger in every situation. We thus consider a case-by-case approach
for deciding which one to choose. To this end, we consider the slack of
the PB constraint as a heuristic. Indeed, the slack is
\emph{subadditive}: given two PB constraints, the constraint obtained by
adding them has a slack that is at most equal to the sum of the slacks
of the original constraints. Minimizing the slack of the constraint to
choose will put a stronger upper bound on the slack of the constraints
that will be derived later on.

\hypertarget{eliminating-irrelevant-literals}{%
\section{Eliminating Irrelevant
Literals}\label{eliminating-irrelevant-literals}}

To evaluate the impact of irrelevant literals on PB solvers, we designed
an approach for identifying and removing them from PB constraints. Since
deciding whether a literal is relevant in a given PB constraint is
\textsf{NP}-complete \cite[Section 9.6]{booleanFunctions}, in practice,
it seems unrealistic to perform a relevance test for each literal of
each constraint derived by the solver. We thus propose an incomplete but
efficient algorithm for testing literal relevance.

For the sake of illustration, the following PB constraint \(\chi\), in
which we would like to decide whether \(l\) is relevant, will be used as
running example:

\[\alpha l + \sum_{i = 1}^{n} \alpha_i l_i \geq \delta\]

Recall that \(l\) is irrelevant if and only if
\(\chi | l \equiv \chi | \bar{l}\). Note that \(\chi | l\) and
\(\chi | \bar{l}\) only differ in the degree, and that the degree of
\(\chi | \bar{l}\) is greater. Thus, clearly
\(\chi | \bar{l} \models \chi | l\), and the equivalence test boils down
to checking whether:

\[\sum_{i = 1}^{n} \alpha_i l_i \geq \delta - \alpha
\models
\sum_{i = 1}^{n} \alpha_i l_i \geq \delta\]

Observe that this statement holds if and only if there is no
interpretation of \(\sum_{i = 1}^{n} \alpha_i l_i\) equal to any number
between \(\delta - \alpha\) and \(\delta - 1\). Thus, checking that
\(l\) is irrelevant is equivalent to checking that there is no subset of
\(\alpha_1,...,\alpha_n\) whose sum equals any of these numbers, i.e.,
solving an instance of the subset-sum problem for each of these inputs.

It is folklore that this can be done in time \(O(n\delta)\) using
dynamic programming \cite[Chapter~34.5]{algorithms} that is
pseudopolynomial in the encoding size. However, in our context, both
\(n\) and \(\delta\) may be very large, and it would be very inefficient
to solve subset-sum on such inputs. As a workaround, we present an
approach for solving subset-sum \emph{incompletely}. Our detection
algorithm needs to ensure that there is \emph{no} solution to the
considered subset-sum instance in order to correctly detect irrelevant
literals, even though some of them may be missed. To this end, we
introduce a detection algorithm based on solving subset-sum
\emph{modulo} a given positive integer \(p\) (fixed for all applications
of this algorithm). Since modular arithmetic is compatible with
addition, one can ensure that, if there is a solution for the subset-sum
problem with the original values, this solution is also a solution of
the subset-sum problem considered modulo \(p\).

As this procedure remains time consuming, one can also take advantage of
Proposition~\ref{prop:min} to reduce the number of checks to perform.
This can be achieved by ordering the literals by ascending coefficients.
Only one check per coefficient is required, and once a relevant literal
is identified, there is no more irrelevant literal to remove.

\begin{example}
\label{ex1}
Take the constraint $12a + 6b + 6c + 2d + 2e~\geq~18$.
If we want to check the relevance of $e$ in this constraint, the multiset of
coefficients to consider is $\{ 12, 6, 6, 2\}$ (as $e$ is ignored for the
purpose of the check).

First, let us consider $p = 5$.
The multiset of coefficients modulo $p$ is $\{ 2, 1, 1, 2 \}$.
The set of all possible subset sums modulo $5$ is thus $\{ 0, 1, 2, 3, 4 \}$, and $e$ is wrongly detected as relevant, since there exists a subset sum equal to $2 \equiv 17 \mod 5$.
If we now consider $p = 6$, the multiset of coefficients becomes
$\{ 0, 0, 0, 2 \}$, and the possible subset sums modulo $6$ are $\{ 0, 2 \}$.
It is thus impossible to find any sum equal to either $4 \equiv 16 \mod 6$
or $5 \equiv 17 \mod 6$, so $e$ is irrelevant.

As a consequence, $d$ can also be removed, since it has the same coefficient as $e$.
Then, as $c$ is relevant, it is detected as such by our algorithm, which never gives the wrong answer for
relevant literals.
All remaining literals are thus relevant and the removal stops.
\end{example}

\hypertarget{experimental-results}{%
\section{Experimental Results}\label{experimental-results}}

This section provides experimental results showing to what extent
irrelevant literals are present in the constraints inferred by PB
solvers. We also give an attempt to evaluate their impact on the
performance.

To do so, we implemented the incomplete detection algorithm presented
above in \emph{Sat4j} \Cite{sat4j}, and consider the whole set of
decision benchmarks containing only small integers used in the PB
evaluation since its very first edition \Cite{pb05}. After some
preliminary experiments, we chose \(4547\) as parameter \(p\) of our
incomplete subset-sum algorithm, and \(500\) as bound on the number of
literals in the constraints to consider. These values have been chosen
in a way that nearly all constraints in our experiments are treated
efficiently, while the number of literals wrongly detected as relevant
remains reasonable.

All experiments presented in this section have been run on a cluster
equipped with quadcore bi-processors Intel XEON E5-5637 v4 (3.5 GHz) and
128 GB of memory, with a memory limit set to 64~GB.

\hypertarget{production-of-irrelevant-literals}{%
\subsection{Production of Irrelevant
literals}\label{production-of-irrelevant-literals}}

In order to see to what extent PB solvers produce irrelevant literals,
we first use the set of benchmarks as input for both \emph{Sat4j}'s
implementation of generalized resolution (\emph{Sat4j-CP}) \Cite{sat4j}
and \emph{RoundingSat} \Cite{rs}, without modifying their conflict
analysis. We chose these two solvers because, in the latest competition,
no other solver implemented cutting planes based inference to solve PB
instances (note that \emph{RoundingSat} was called
\emph{cdcl-cuttingplanes} at that time). For these experiments, we let
these solvers run for 5 minutes, and let them dump at most 100,000
constraints derived during their conflict analyses, which are then given
to the detection algorithm we implemented in \emph{Sat4j}. The
constraints are dumped after the application of any rule that may
introduce irrelevant literals. In the case of \emph{Sat4j-CP},
irrelevant literals may be produced either after having applied the
weakening operation on the reason, or after having applied the
cancellation rule between the reason and the conflicting constraints.
Regarding \emph{RoundingSat}, irrelevant literals are always ``hidden''.
Indeed, for efficiency reasons, the weakening and division rules are
applied at the same time (we decoupled these two operations for our
experiments, as time was not considered). However, by construction, if
the weakening operation produces irrelevant literals, the following
division, by ensuring that the pivot for the cancellation rule has a
weight equal to \(1\), will make all irrelevant literals artificially
relevant (the pivot can never be irrelevant, and so do literals sharing
the same coefficient at the end). As such, irrelevant literals produced
in \emph{RoundingSat} become systematically artificially relevant at the
same step.

\begin{figure}[h]
    \includegraphics[width=.45\textwidth]{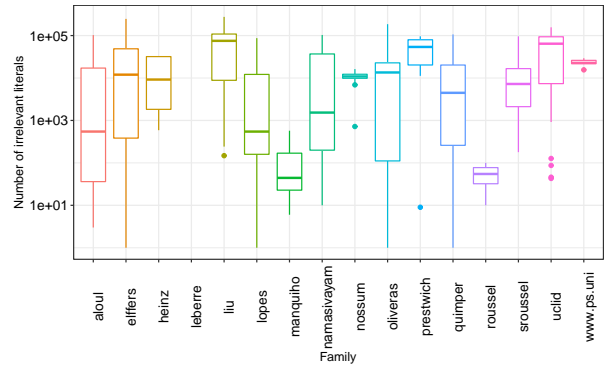}
    \caption{
        Number of irrelevant literals produced by \emph{Sat4j-CP}.
        For readability, only submitters' names are displayed.
    }
    \label{sat4j-preliminary}
\end{figure}

\begin{figure}[h]
    \includegraphics[width=.45\textwidth]{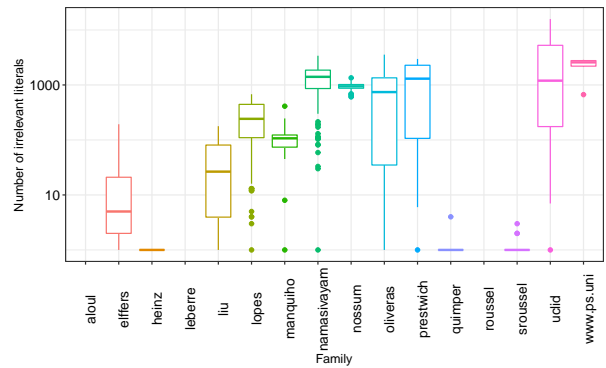}
    \caption{
        Number of irrelevant literals produced by \emph{RoundingSat}.
        For readability, only submitters' names are displayed.
    }
    \label{roundingsat-preliminary}
\end{figure}

The results of these experiments are shown in Figures
\ref{sat4j-preliminary} and~\ref{roundingsat-preliminary} with
\emph{boxplots} illustrating how many irrelevant literals were produced
by the solver for each family. Each boxplot displays the quartiles with
the horizontal bars and the estimated minimum and maximum with the
vertical bars computed from the number of detected irrelevant literals
in each instance of this family. Points represent \emph{outliers}, which
are instances for which the number of detected irrelevant literals is
either below or above the estimated minimum or maximum, respectively.
Because there are big differences between the families, boxplots are
drawn using logarithmic scales. Also, the number of constraints in which
irrelevant literals appear varies between the families: in some
families, only few constraints contain many irrelevant literals, whereas
in some others up to 75\% of the constraints contain irrelevant
literals. These boxplots reveal that \emph{Sat4j} produces irrelevant
literals (see Figure \ref{sat4j-preliminary}), and that this is also the
case for \emph{RoundingSat}, but to a lesser extent (see Figure
\ref{roundingsat-preliminary}). However, because all irrelevant literals
produced by \emph{RoundingSat} become immediately artificially relevant
literals, they appear as irrelevant only once, while irrelevant literals
produced by \emph{Sat4j} during conflict analysis may remain irrelevant
in several consecutive derivation steps, and thus appear as such
multiple times.

\hypertarget{removal-of-irrelevant-literals}{%
\subsection{Removal of Irrelevant
Literals}\label{removal-of-irrelevant-literals}}

As we showed in a previous section, the presence of irrelevant literals
in the constraints that are derived during conflict analysis may lead to
the inference of weaker constraints. If we want to avoid this behavior,
we need to remove \emph{all} irrelevant literals produced during this
process. Since this task is \textsf{NP}-hard, we used the incomplete
approach (described above) for detecting these literals. However, this
approach remains costly in practice, as 26\% of the runtime is spent in
average to detect irrelevant literals. For some families, such as
\texttt{Aardal\_1}, \texttt{armies}, \texttt{ShortestPathBA} and
\texttt{tsp}, more than 75\% of the runtime is required for most of the
instances. This is why we considered a time independent measure to
evaluate the impact of irrelevant literals: for all instances, we
compared the sizes of the proofs built by the solver, measured in number
of generalized resolution steps applied during the whole execution of
the solver. Only unsatisfiable instances are considered here, as the
size of the proofs only makes sense for such instances. The results are
shown on Figure \ref{proof-size}.

\begin{figure}
    \centering
    \includegraphics[width=.48\textwidth]{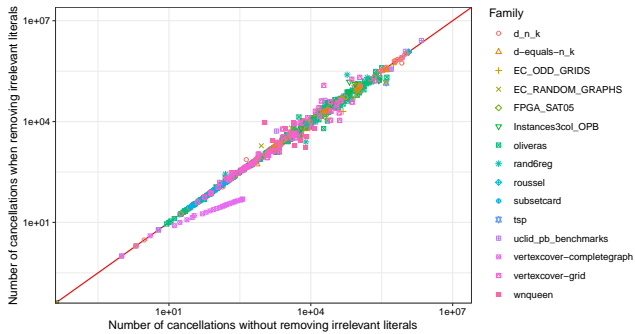}
    \caption{
        Comparison of the size of the proof built by \emph{Sat4j-CP} on all
        unsatisfiable instances.
        The families used to describe the instances are the most specific
        subfamilies.
    }
    \label{proof-size}
\end{figure}

At first sight, it seems that there is not a great difference between
the two approaches (this is also the case between the three different
approaches presented in this paper for removing irrelevant literals,
omitted for space reason). However, remember that our approach is
incomplete and that, even though nearly all constraints are treated in
most families (less than 75\% of the constraints are treated only for
instances from \texttt{d-equals-n\_k}, \texttt{FPGA\_SAT05},
\texttt{robin}, \texttt{ShortestPathBA} and \texttt{ShortestPathNG}), we
may wrongly detect as relevant literals that are actually irrelevant. To
really evaluate the impact of irrelevant literals on the solver
performance, we would need to remove all of them. However, in practice,
a complete approach is clearly unreasonable: our algorithm manages to
deal with constraints having a degree up to \(10^{410415}\) which is out
of reach of any complete approach.

Also, note that removing irrelevant literals impacts the heuristic used
to select the variables to assign. Indeed, as for classical SAT solving,
PB solvers use a heuristic based on VSIDS \Cite{chaff}. This heuristic
\emph{bumps} the variables that are involved in a conflict, i.e., makes
them more likely to be chosen next. When irrelevant literals are
removed, the associated variables are not bumped anymore, which alters
the behavior of the heuristic, and may have unexpected side effects as
this heuristic is not fully understood \Cite{cdclpractice}. In
particular, it is hard to evaluate the impact of bumping irrelevant
literals. On the one hand, one could argue that, because these literals
are irrelevant, they do not play any role in the conflict. On the other
hand, if they were relevant at some point, then their assignment may
have triggered some propagations, and, in such a case, they may actually
have contributed to the falsification of the constraint.

However, our contribution is not to find a solution to the problem of
irrelevant literals, but to get a better understanding of their impact.
To this end, let us consider more specifically the
\texttt{vertexcover-completegraph} family for which
Figure~\ref{proof-size} shows that elimination of irrelevant literals
has a significant impact on the size of the proof produced by
\emph{Sat4j}. The instances of this family encode that complete graphs
do not have small vertex covers \Cite{jakobproofsolverspractical}. As
shown by Figure \ref{proof-size}, the number of performed cancellations
is exponentially smaller after removing irrelevant literals. A closer
inspection at the solver's behavior shows that only few irrelevant
literals are actually removed during the search. In particular, all
these literals are detected and removed after the first conflict
analysis, which produces a constraint of the form
\(kx_1 + x_2 + ... + x_{k} \geq k\), where
\(k = \lceil \frac{n}{2} \rceil - 1\). One can observe that
\(x_2, ..., x_k\) are all irrelevant because their coefficients sum up
to only \(k-1\), and that the constraint is actually equivalent to the
unit clause \(x_1 \geq 1\). In all further conflict analyses, no
irrelevant literals are produced: this illustrates how few irrelevant
literals may have an impact on the whole proof built by the solver, and
may threaten its performance.

\hypertarget{conclusion-and-future-works}{%
\section{Conclusion and Future
Works}\label{conclusion-and-future-works}}

In this paper, we have shown that irrelevant literals may be introduced
in the constraints derived by PB solvers using cutting planes rules, and
that such literals may have an impact on the strength of the reasoning.
In particular, when irrelevant literals appear in the intermediate
constraints, the learned constraint may be weaker than it could be, as
it may contain artificially relevant literals. This may even happen when
the constraint is a clause or a cardinality constraint, despite the fact
that such constraints cannot contain irrelevant literals. We emphasized
that while assigning irrelevant literals produces logically equivalent
constraints, their slack may differ. The slack provides thus a
convenient heuristic to handle irrelevant literals in a solver. To
evaluate the practical impact of these literals on PB solvers, we
designed an approximation algorithm for detecting and removing them at
each derivation step. Our experimental results show that this approach
allows to find irrelevant literals in PB solvers such as \emph{Sat4j}
and \emph{RoundingSat}, and that these literals may have an impact on
the size of the proof they build.

Our approach for eliminating irrelevant literals is however too costly
in practice to be considered as a counter-measure to their production in
current PB solvers. A possible improvement for our algorithm, as
suggested by one of the reviewers, is to consider multiple subset sum
problems with small prime numbers, instead of one problem with a single
large number, as for the Chinese remainder theorem. However, the best
approach would be to avoid introducing irrelevant literals. Our ultimate
goal is to define a proof system on PB constraints which ensures that
constraints derived from PB constraints with only relevant literals do
not contain irrelevant literals. The main difficulty is to find a
\emph{complete} set of rules that can be efficiently implemented to
perform conflict analysis, and replace the current approaches used in PB
solvers.

\section*{Acknowledgements}

The authors are grateful to the anonymous reviewers for their numerous
comments, that greatly helped to improve the presentation of the paper.
Part of this work was supported by the French Ministry for Higher
Education and Research and the Hauts-de-France Regional Council through
the ``Contrat de Plan État Région (CPER) DATA''.

\bibliographystyle{named}
\bibliography{bibliography}

\begin{thebibliography}{}

\bibitem[\protect\citeauthoryear{Benhamou and Sais}{1994}]{lakhdar2}
Belaid Benhamou and Lakhdar Sais.
\newblock Tractability through symmetries in propositional calculus.
\newblock {\em J. Autom. Reasoning}, 12(1):89--102, 1994.

\bibitem[\protect\citeauthoryear{Biere \bgroup \em et al.\egroup
  }{2014}]{cardinalitydetection}
Armin Biere, Daniel {Le Berre}, Emmanuel Lonca, and Norbert Manthey.
\newblock Detecting cardinality constraints in {CNF}.
\newblock In {\em Proceedings of SAT'14}, pages 285--301, 2014.

\bibitem[\protect\citeauthoryear{Chai and Kuehlmann}{2005}]{galena}
Donald Chai and Andreas Kuehlmann.
\newblock {A fast pseudo-Boolean constraint solver}.
\newblock {\em {IEEE} Trans. on {CAD} of Integrated Circuits and Systems},
  pages 305--317, 2005.

\bibitem[\protect\citeauthoryear{Cook \bgroup \em et al.\egroup
  }{1987}]{cook87}
William Cook, Collette~R. Coullard, and György Tur\'{a}n.
\newblock {On the Complexity of Cutting-plane Proofs}.
\newblock {\em Discrete Appl. Math.}, pages 25--38, 1987.

\bibitem[\protect\citeauthoryear{Cormen \bgroup \em et al.\egroup
  }{2009}]{algorithms}
Thomas~H. Cormen, Charles~E. Leiserson, Ronald~L. Rivest, and Clifford Stein.
\newblock {\em Introduction to Algorithms, Third Edition}.
\newblock 2009.

\bibitem[\protect\citeauthoryear{Crama and Hammer}{2011}]{booleanFunctions}
Yves Crama and Peter~L. Hammer.
\newblock {\em Boolean Functions: Theory, Algorithms, and Applications}.
\newblock 2011.

\bibitem[\protect\citeauthoryear{Devriendt \bgroup \em et al.\egroup
  }{2016}]{symmetryjo}
Jo~Devriendt, Bart Bogaerts, Maurice Bruynooghe, and Marc Denecker.
\newblock Improved static symmetry breaking for {SAT}.
\newblock In {\em Proceedings of SAT'16}, pages 104--122, 2016.

\bibitem[\protect\citeauthoryear{Dixon and Ginsberg}{2002}]{pbchaff}
Heidi~E. Dixon and Matthew~L. Ginsberg.
\newblock Inference methods for a pseudo-boolean satisfiability solver.
\newblock In {\em Proceedings of AAAI'02}, pages 635--640, 2002.

\bibitem[\protect\citeauthoryear{E{\'e}n and S{\"o}rensson}{2004}]{minisat}
Niklas E{\'e}n and Niklas S{\"o}rensson.
\newblock An extensible sat-solver.
\newblock In {\em Proceedings of SAT'04}, pages 502--518, 2004.

\bibitem[\protect\citeauthoryear{Een and Sörensson}{2006}]{minisatp}
Niklas Een and Niklas Sörensson.
\newblock Translating pseudo-boolean constraints into sat.
\newblock {\em JSAT}, pages 1--26, 2006.

\bibitem[\protect\citeauthoryear{Elffers and Nordstr\"{o}m}{2018}]{rs}
Jan Elffers and Jakob Nordstr\"{o}m.
\newblock Divide and conquer: Towards faster pseudo-boolean solving.
\newblock In {\em Proceedings of IJCAI'18}, pages 1291--1299, 2018.

\bibitem[\protect\citeauthoryear{Elffers and Nordstr{\"{o}}m}{2020}]{jakobcard}
Jan Elffers and Jakob Nordstr{\"{o}}m.
\newblock A cardinal improvement to pseudo-boolean solving.
\newblock In {\em Proceedings of AAAI'20}, page to appear, 2020.

\bibitem[\protect\citeauthoryear{Elffers \bgroup \em et al.\egroup
  }{2018a}]{jakobproofsolverspractical}
Jan Elffers, Jes{\'{u}}s Gir{\'{a}}ldez{-}Cr\'{u}, Jakob Nordstr{\"{o}}m, and
  Marc Vinyals.
\newblock Using combinatorial benchmarks to probe the reasoning power of
  pseudo-boolean solvers.
\newblock In {\em Proceedings of SAT'18}, pages 75--93, 2018.

\bibitem[\protect\citeauthoryear{Elffers \bgroup \em et al.\egroup
  }{2018b}]{cdclpractice}
Jan Elffers, Jesús Giráldez-Cru, Stephan Gocht, Jakob Nordström, and Laurent
  Simon.
\newblock {Seeking Practical CDCL Insights from Theoretical SAT Benchmarks}.
\newblock In {\em Proceedings of IJCAI'18}, pages 1300--1308, 2018.

\bibitem[\protect\citeauthoryear{Gocht \bgroup \em et al.\egroup
  }{2019}]{divvssat}
Stephan Gocht, Jakob Nordstr{\"{o}}m, and Amir Yehudayoff.
\newblock On division versus saturation in pseudo-boolean solving.
\newblock In {\em Proceedings of IJCAI'19}, pages 1711--1718, 2019.

\bibitem[\protect\citeauthoryear{Gomory}{1958}]{gomory58}
Ralph~E. Gomory.
\newblock Outline of an algorithm for integer solutions to linear programs.
\newblock {\em Bulletin of the American Mathematical Society}, pages 275--278,
  1958.

\bibitem[\protect\citeauthoryear{Haken}{1985}]{haken85}
Armin Haken.
\newblock {The intractability of resolution}.
\newblock {\em {Theoretical Computer Science}}, pages 297--308, 1985.

\bibitem[\protect\citeauthoryear{Hooker}{1988}]{hooker88}
John~N. Hooker.
\newblock Generalized resolution and cutting planes.
\newblock {\em Annals of Operations Research}, pages 217--239, 1988.

\bibitem[\protect\citeauthoryear{J{\"{a}}rvisalo \bgroup \em et al.\egroup
  }{2012}]{sat12}
Matti J{\"{a}}rvisalo, Daniel~Le Berre, Olivier Roussel, and Laurent Simon.
\newblock The international {SAT} solver competitions.
\newblock {\em {AI} Magazine}, 33(1), 2012.

\bibitem[\protect\citeauthoryear{Le~Berre and Parrain}{2010}]{sat4j}
Daniel Le~Berre and Anne Parrain.
\newblock {The SAT4J library, Release 2.2, System Description}.
\newblock {\em {JSAT}}, pages 59--64, 2010.

\bibitem[\protect\citeauthoryear{Manquinho and Roussel}{2006}]{pb05}
Vasco Manquinho and Olivier Roussel.
\newblock {The First Evaluation of Pseudo-Boolean Solvers (PB'05)}.
\newblock {\em JSAT}, pages 103--143, 2006.

\bibitem[\protect\citeauthoryear{Marques-Silva and Sakallah}{1999}]{grasp}
Joao Marques-Silva and Karem~A. Sakallah.
\newblock Grasp: A search algorithm for propositional satisfiability.
\newblock {\em IEEE Trans. Computers}, pages 220--227, 1999.

\bibitem[\protect\citeauthoryear{Martins \bgroup \em et al.\egroup
  }{2014}]{openwbo}
Ruben Martins, Vasco Manquinho, and In{\^e}s Lynce.
\newblock Open-wbo: A modular maxsat solver.
\newblock In {\em Proceedings of SAT'14}, pages 438--445, 2014.

\bibitem[\protect\citeauthoryear{Metin \bgroup \em et al.\egroup
  }{2019}]{metin}
Hakan Metin, Souheib Baarir, and Fabrice Kordon.
\newblock Composing symmetry propagation and effective symmetry breaking for
  {SAT} solving.
\newblock In {\em Proceedings of NFM'19}, pages 316--332, 2019.

\bibitem[\protect\citeauthoryear{Moskewicz \bgroup \em et al.\egroup
  }{2001}]{chaff}
Matthew~W. Moskewicz, Conor~F. Madigan, Ying Zhao, Lintao Zhang, and Sharad
  Malik.
\newblock Chaff: Engineering an efficient sat solver.
\newblock In {\em Proceedings of DAC'01}, pages 530--535, 2001.

\bibitem[\protect\citeauthoryear{Nordstr\"{o}m}{2015}]{nordstrom15}
Jakob Nordstr\"{o}m.
\newblock {On the Interplay Between Proof Complexity and SAT Solving}.
\newblock {\em ACM SIGLOG News}, pages 19--44, 2015.

\bibitem[\protect\citeauthoryear{Roussel and Manquinho}{2009}]{handbook}
Olivier Roussel and Vasco~M. Manquinho.
\newblock {Pseudo-Boolean and Cardinality Constraints}.
\newblock In {\em Handbook of Satisfiability}, chapter~22, pages 695--–733.
  2009.

\bibitem[\protect\citeauthoryear{Sakai and Nabeshima}{2015}]{naps}
Masahiko Sakai and Hidetomo Nabeshima.
\newblock {Construction of an ROBDD for a PB-Constraint in Band Form and
  Related Techniques for PB-Solvers}.
\newblock {\em IEICE Transactions on Information and Systems}, pages
  1121--1127, 2015.

\bibitem[\protect\citeauthoryear{Sheini and Sakallah}{2006}]{pueblo}
Hossein~M. Sheini and Karem~A. Sakallah.
\newblock {Pueblo: A Hybrid Pseudo-Boolean SAT Solver}.
\newblock {\em {JSAT}}, pages 165--189, 2006.

\bibitem[\protect\citeauthoryear{Vinyals \bgroup \em et al.\egroup
  }{2018}]{jakobproofsolvers}
Marc Vinyals, Jan Elffers, Jes{\'{u}}s Gir{\'{a}}ldez{-}Cr\'{u}, Stephan Gocht,
  and Jakob Nordstr{\"{o}}m.
\newblock In between resolution and cutting planes: {A} study of proof systems
  for pseudo-boolean {SAT} solving.
\newblock In {\em Proceedings of SAT'18}, pages 292--310, 2018.

\end{thebibliography}
\end{document}